\newtheorem{theorem}{Theorem}
\title{PC-UNet: An Enforcing Poisson Statistics U-Net for Positron Emission Tomography Denoising\\
}
\author[1,2*]{Yang Shi\orcidlink{0009-0009-3928-7495}\thanks{* These authors contributed equally to this work: sudo.shiyang@gmail.com, ethanwangjc@163.com, lu.liangsi.cn@gmail.com}}
\author[3*]{Jingchao Wang\orcidlink{0000-0002-0099-539X}}
\author[4*]{Liangsi Lu\orcidlink{0009-0006-2839-3901}}
\author[5]{Mingxuan Huang\orcidlink{0009-0005-3428-2119}}
\author[1]{Ruixin He\orcidlink{0009-0000-6137-4901}}
\author[4]{Yifeng Xie\orcidlink{0009-0008-8333-9419}}
\author[6]{\\Hanqian Liu}
\author[1]{Minzhe Guo\orcidlink{0009-0001-2511-4763}}
\author[1]{Yangyang Liang}
\author[7]{Weipeng Zhang\orcidlink{0009-0001-5110-565X}}
\author[2\dag]{Zimeng Li\thanks{\dag~Corresponding authors: li\_zimeng@szpu.edu.cn, xuhangc@hzu.edu.cn.}}
\author[8]{Xuhang Chen\orcidlink{0000-0001-6000-3914}\thanks{This work was supported in part by Shenzhen Medical Research Fund (Grant No. A2503006), in part by the National Natural Science Foundation of China (Grant No. 62501412), in part by Shenzhen Polytechnic University Research Fund (Grant No. 6025310023K) and in part by Guangdong Basic and Applied Basic Research Foundation (Grant No. 2024A1515140010).}}
\affil[1]{School of Computer Science and Technology, Guangdong University of Technology, Guangzhou, China}
\affil[2]{School of Electronic and Communication Engineering, Shenzhen Polytechnic University, Shenzhen, China}
\affil[3]{School of Computer Science, Peking University, Beijing, China}
\affil[4]{School of Mathematics and Statistics, Guangdong University of Technology, Guangzhou, China}
\affil[5]{Zhongshan School of Medicine, Sun Yat-sen University, Guangzhou, China}
\affil[6]{School of Mathematics (Zhuhai), Sun Yat-sen University, Guangzhou, China}
\affil[7]{Future Technology Institute, South China University of Technology, Guangzhou, China}
\affil[8]{School of Computer Science and Engineering, Huizhou University, Huizhou, China}
\begin{document}

\maketitle
\begin{abstract}
Positron Emission Tomography (PET) is crucial in medicine, but its clinical use is limited due to high signal-to-noise ratio doses increasing radiation exposure. Lowering doses increases Poisson noise, which current denoising methods fail to handle, causing distortions and artifacts. We propose a Poisson Consistent U-Net (PC-UNet) model with a new Poisson Variance and Mean Consistency Loss (PVMC-Loss) that incorporates physical data to improve image fidelity. PVMC-Loss is statistically unbiased in variance and gradient adaptation, acting as a Generalized Method of Moments implementation, offering robustness to minor data mismatches. Tests on PET datasets show PC-UNet improves physical consistency and image fidelity, proving its ability to integrate physical information effectively.
\end{abstract}

\begin{IEEEkeywords}
Medical Image Denoising, Enforcing Poisson Statistics Deep Learning, Poisson Noise, U-Net.
\end{IEEEkeywords}

\section{Introduction}

With the rapid advancement of deep learning~\cite{emnlp/XieZCHC23,mm/XieZCCH24,wang2024novel,wang2024beyond}, medical imaging encompasses not only anatomical depiction but also embrace functional and molecular interrogation of disease. Methods such as X-ray and Computed Tomography focus on morphology~\cite{hussain2022modern}, while Magnetic Resonance Imaging (MRI) is superior in differentiating soft tissues~\cite{bottomley1982nmr}. Positron Emission Tomography (PET) provides insight into cellular metabolism, aiding in early cancer detection, accurate staging, and monitoring therapy response~\cite{pain2022deep}.

PET, despite its clinical utility, remains the noisiest imaging modality due to Poisson statistics affecting photon detection: lower doses lead to fewer counts and more noise. Initially, simple CNNs were used, later succeeded by U-Nets, GANs, and diffusion models, trained on L1/L2 losses~\cite{seyyedi2024deep,shi2019novel}. The required standard dose for diagnostic images raises radiation exposure~\cite{boellaard2015fdg}, prompting dose reduction to lower patient risk~\cite{akita202518f}. However, fewer photons mean noisier images, reducing lesion detectability~\cite{yan2016method,shepp2007maximum}. Improving image quality under low-dose constraints is a key challenge~\cite{hu2023comparative}. Many studies~\cite{pain2022deep} first reconstruct a noisy image, then enhance it with deep networks.

\begin{figure}[t]
    \centering
    \begin{minipage}{0.157\textwidth}
        \centering
        \includegraphics[width=\linewidth]{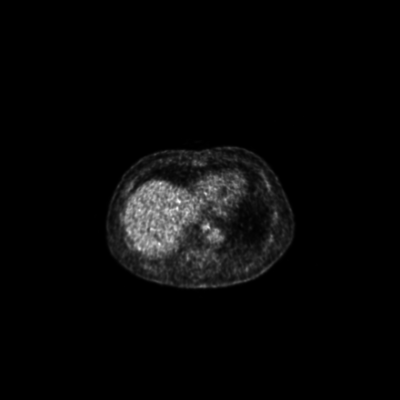}
        \par(a)
    \end{minipage} 
    \begin{minipage}{0.157\textwidth}
        \centering
        \includegraphics[width=\linewidth]{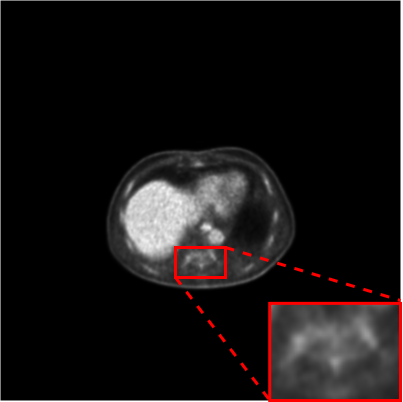}
        \par(b)
    \end{minipage} 
    \begin{minipage}{0.157\textwidth}
        \centering
        \includegraphics[width=\linewidth]{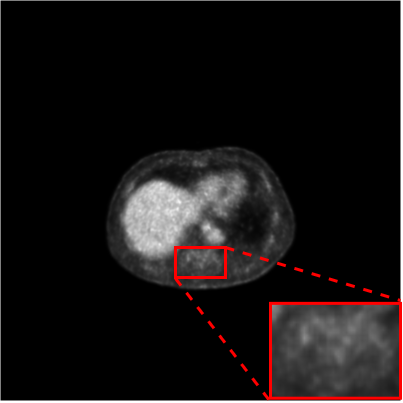}
        \par(c)
    \end{minipage}
    \caption{(a) is a low-dose PET. For the small number of photons, the signal of (a) is completely overwhelmed by noise. (b) is a full-dose PET, we consider it as a clean image. (c) is obtained by denoising (a) through U-Net. The red rectangular area marked out is a region with relatively low photon count. This characteristic is manifested as lower brightness but clear structure in (b), while (c) appears as blurred structure and severe noise artifacts.
    }
    \label{fig:intro}
\end{figure}


Despite progress, \cref{fig:intro} shows limitations. Without physical constraints, networks overly smooth strong noise in bright areas, erasing details, and fail to address noise in dark areas, causing artifacts. In low-dose conditions, photon events follow a Poisson distribution, where noise variance is proportional to signal mean. Strong signal areas have intense noise, while weak signal areas have less. L1 or L2 loss functions treat all pixels equally, reducing errors uniformly \cite{liu2024cross}.
 

We propose \textbf{P}oisson \textbf{C}onsistent \textbf{U}-\textbf{Net} (PC-UNet), a framework that improves denoising by incorporating physical principles into its optimization. PC-UNet features \textbf{P}oisson \textbf{V}ariance and \textbf{M}ean \textbf{C}onsistency \textbf{L}oss (PVMC-Loss), which constrains the model to adhere to the imaging process's physical principles. It enforces the ratio of local noise variance to the mean of the denoised signal, aligning the model with Poisson statistics and enhancing consistency and robustness. 

A theoretical analysis confirms the effectiveness of PVMC-Loss, with proofs of its asymptotic unbiasedness and adaptive gradients. These properties prevent systematic value distortion and prioritize challenging low-signal areas, respectively. By interpreting PVMC-Loss within the Generalized Method of Moments (GMM) framework \cite{hansen1982large}, we link our method to robust statistical principles, explaining its accuracy improvements.

The main contributions of this paper can be summarized as follows:
\begin{itemize}
\item We propose PC-UNet, a novel framework that incorporates physical constraints into the training process to overcome the inherent limitations of conventional U-Net.
\item To ensure that the network’s output obeys the physical Poisson statistics of low-dose PET, we design PVMC-Loss, a loss function that explicitly enforces the ratio between residual-noise variance and local signal mean.
\item We establish a theoretical foundation for the proposed method, prove its effectiveness, and demonstrate its connection to the GMM, thereby providing statistical justification for the improved quantitative accuracy.
\end{itemize}

\section{Method}

The loss function of PC-UNet is composed of L1 loss and PVMC-Loss. In this section, we derive and construct our proposed PVMC-Loss from the underlying physical principles of PET imaging and provide a complete theoretical property analysis for it. Moreover, we build our proposed PC-UNet. The framework of PC-UNet is shown in \cref{fig:method}.

\subsection{PVMC-Loss}

The PVMC-Loss is derived from the PET count statistics and linear reconstruction theory, and the formal definition of this loss function is given. The physical basis of PET imaging is the photon counting process, which inherently follows a Poisson distribution. Specifically, the detector count $N_j$ for each Line Of Response (LOR) can be modeled as an independent Poisson random variable with an expectation, $N_{j}\sim\text{Poisson}(\lambda_{j})$, equal to the true photon intensity $\lambda_j$:

\begin{equation}
Var(N_{j})=\mathbb{E}(N_{j})=\lambda_{j},
\end{equation}
where $Var(\cdot)$ is defined as the variance and $\mathbb{E}(\cdot)$ is defined as mathematical expectation.

However, clinical PET images are not direct representations of raw counts, but undergo complex correction and reconstruction processes. Given sufficient iteration or filtered back projection (FBP), the value $\hat{y}_i$ of voxel $i$ in the reconstructed image can be approximated as a weighted linear combination of all LOR counts:
\begin{equation}
\hat{y}_{i}= \sum_{j} w_{ij}\,c_{j}\,N_{j},
\end{equation}
where $\hat{y}_i$ is the value of the voxel $i$, $w_{ij}$ is the reconstruction weight defined by the system matrix, and $c_{j}$ is the known constant coefficient used to correct for scattering, attenuation, and detector sensitivity. 

Based on this linear reconstruction model, the variance-mean relationship of the reconstructed image voxels can be derived. We compute the expectation of the reconstructed voxel value $\hat{y}_i$. By the expected linearity property, we have:
\begin{equation}
\begin{aligned}
\mathbb{E}(\hat{y}_{i}) &= \mathbb{E}\left(\sum_{j} w_{ij} c_j N_j\right) = \sum_{j} w_{ij} c_j \mathbb{E}(N_j)\\
&=\sum_{j} w_{ij} c_j \lambda_j.
\end{aligned}
\end{equation}

Similarly, the variance of $\hat{y}_i$ can be derived as follows:
\begin{equation}
\begin{aligned}
Var(\hat{y}_{i}) &= Var\left(\sum_{j} w_{ij} c_j N_j\right) = \sum_{j} (w_{ij} c_j)^2 Var(N_j)\\
&=\sum_{j} (w_{ij} c_j)^2 \lambda_j.
\end{aligned}
\end{equation}

To obtain a universal relation, we assume that the background activity is approximately constant in a locally uniform neighborhood, $\lambda_j \approx \lambda$. Under this condition, the above expectation and variance can be simplified as follows:

\begin{equation}
\begin{aligned}
\mathbb{E}(\hat{y}_{i}) \approx \lambda \sum_{j} w_{ij} c_j,
\end{aligned}
\end{equation}
\begin{equation}
\begin{aligned}
\quad Var(\hat{y}_{i}) \approx \lambda \sum_{j} w_{ij}^{2} c_{j}^{2}.
\end{aligned}
\end{equation}

\begin{figure}[t]
    \centering
    \includegraphics[width=0.95\linewidth]{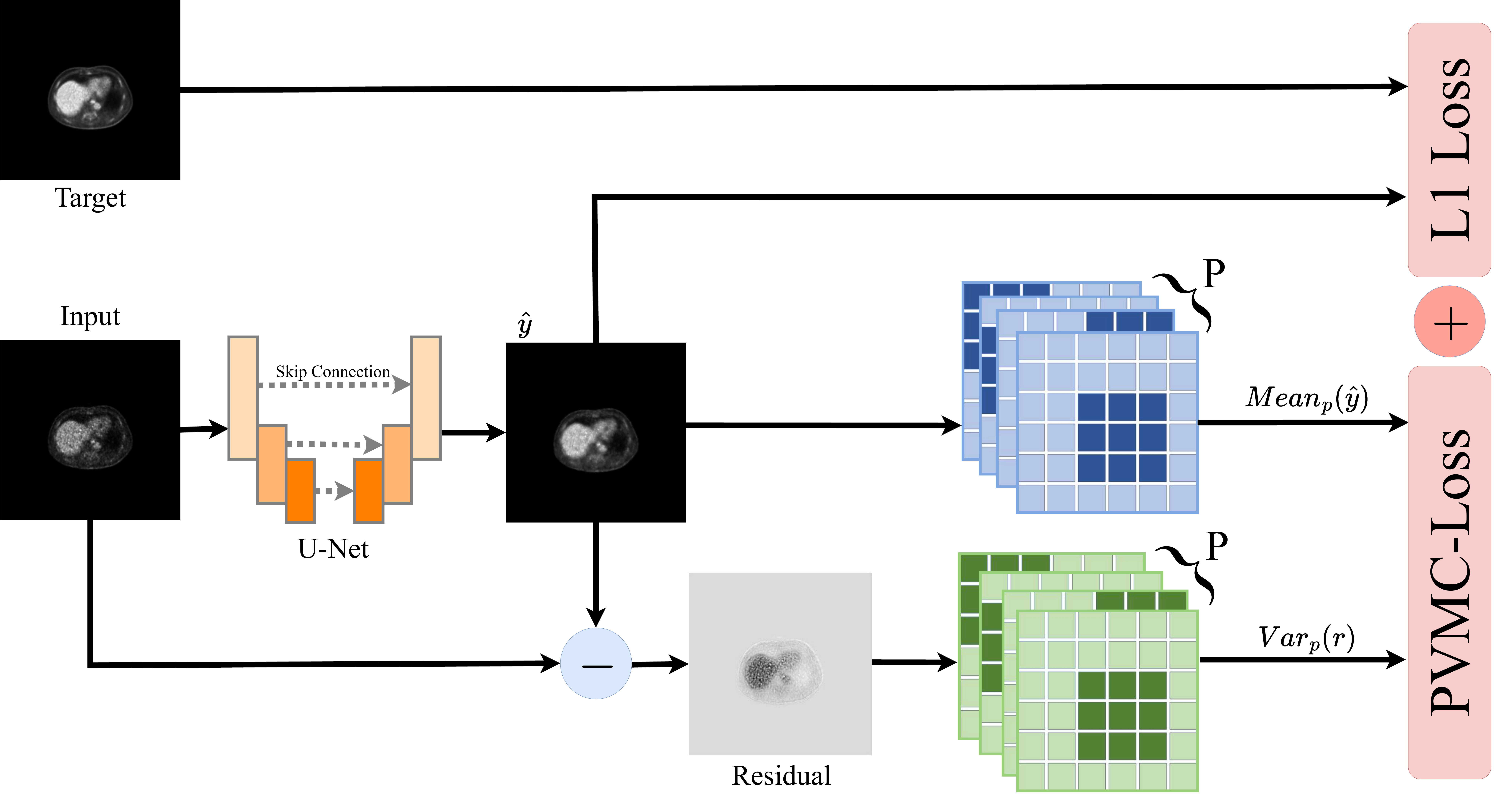}
    \caption{
Framework of PC-UNet. We use the denoised image and the patch of the residual image, calculating the mean and variance to obtain our proposed PVMC-Loss.
    }
    \label{fig:method}
\end{figure}

By calculating the ratio of the variance to the expectation, we derive a key physical parameter:

\begin{equation}
\begin{aligned}
k = \frac{Var(\hat{y}_{i})}{\mathbb{E}(\hat{y}_{i})} = \frac{\lambda \sum_j w_{ij}^{2}c_j^{2}}{\lambda \sum_j w_{ij} c_j} = \frac{\sum_j w_{ij}^{2}c_j^{2}}{\sum_j w_{ij} c_j} > 0,
\end{aligned}
\end{equation}
where $k$ is defined as the poisson slope. The activity term $\lambda$ in this ratio is completely eliminated, so that $k$ only depends on the geometry of the scanner, the correction factor and the filter kernel used in the reconstruction, and can be regarded as a global constant under a fixed scanning protocol. This establishes $k$ as a physical constant for a given scanning protocol. However, for practical implementation where precise calibration might be unavailable, we propose and validate a flexible strategy of treating $k$ as a learnable parameter co-optimized with the network.

Based on this physical relationship, we construct the constraint objective for the denoising task. Denoising network is defined as $f_{\theta}(\cdot)$, input a low-dose image $x$, and output a denoising estimate $\hat{y}$. The noise Residual is denoted as $r := x - \hat{y}$. If the network can be perfectly denoised, $\hat{y}$ approximates the true noise-free signal $y$, then the statistical characteristics of the residual $r$ should be consistent with the noise in the original imaging process, that is, it satisfies:
\begin{equation}
\begin{aligned}
Var(r) \approx k \cdot y.
\end{aligned}
\end{equation}

By approximating $y$ with $\hat{y}$ during training and enforcing this constraint on randomly sampled patches $p$, we can obtain the final form of the local constraint:

\begin{equation}
\begin{aligned}
Var_{p}(r) \approx k\,{Mean}_{p}(\hat{y}),
\label{equ:k}
\end{aligned}
\end{equation}
where $Var_{p}(\cdot)$ is defined as the unbiased sample variance calculated for the local region $p$ of the image and ${Mean}_{p}(\cdot)$ is defined as the sample mean calculated over the local patch $p$ of the image. In order to robustly estimate the local mean and variance in \cref{equ:k} in practical calculations, we adopt an unbiased random sampling strategy. A continuous voxel block of size $(s_x, s_y)$ is intercepted by randomly selecting the starting coordinates $(x_0, y_0)$ on a 3D Cartesian grid, and the set of voxel indices of this block is defined as $p$. For any tensor $z$, its local mean and unbiased sample variance over patch $p$ are defined as follows:
\begin{equation}
\begin{aligned}
{Mean}_{p}(z) = \frac{1}{S}\sum_{k\in p}z_{k},
\end{aligned}
\end{equation}
\begin{equation}
\begin{aligned}
Var_{p}(z) = \frac{1}{S-1}\sum_{k\in p} \bigl(z_{k}-{Mean}_{p}(z)\bigr)^{2},
\end{aligned}
\end{equation}
where $S=s_{x}s_{y}s_{z}$ is the total number of voxels in the patch and we define .

According to the above, our proposed PVMC-Loss can be formally defined as. Given $P$ sampled patches within a batch, the loss function is as follows:
\begin{equation}
\begin{aligned}
\mathcal{L}_{\text{PVMC}}
   =\frac{1}{P}\sum_{p=1}^{P}
   \Bigl|
      \pi_{p}-1
   \Bigr|,
\end{aligned}
\end{equation}
where $|\cdot|$ represents the absolute value function and $\pi_{p}$ is defined as:
\begin{equation}
\begin{aligned}
\pi_{p}
   =\frac{Var_{p}(r)}
         {k\,{Mean}_{p}(\hat{y})+\varepsilon},
\end{aligned}
\end{equation}
where $\varepsilon$ is a minimal positive constant used to prevent the denominator from being zero and to ensure numerical stability, especially in low-count patches where the mean value may approach zero. When $\mathcal{L}_{\text{PVMC}}\to 0$, \cref{equ:k} holds approximately within all sampled patches, thus ensuring that the network output maintains the correct physical scaling relationship in a statistical sense.

Our derivation of the constant $k$ relies on the assumption of a locally uniform activity distribution ($\lambda_{j}\approx\lambda$). While this holds true for background and larger, homogeneous tissue regions, it may be less accurate at sharp boundaries like tumor edges. Future work could explore adaptive methods where $k$ might vary spatially to account for such high-contrast interfaces. In this work, to determine the value of Poisson slope $k$ for a particular scanning protocol, we treat $k$ as a learnable scalar that is co-optimized with the network weights at training time, and verify in the experimental part that it is highly consistent with the offline calibration results, demonstrating the effectiveness and convenience of the method. 

\subsection{Theoretical Analysis of PVMC-Loss}
This section proves a series of properties possessed by our proposed PVMC-Loss.

\subsubsection{Asymptotic Unbiasedness with Bounded Bias}
In our theoretical analysis, we start from two fundamental premises. Firstly, under the standard PET imaging model, the noisy observation value $x$ is an unbiased estimate of the true signal $y$, $\mathbb{E}(x)=y$ \cite{shepp2007maximum,zaidi2007scatter}. Secondly, we adopt a core assumption from the denoising theory, for a network that has achieved convergence in training $\mathcal{L}_{\text{total}}\to0$, the output $\hat{y}$ is asymptotically weakly correlated with the residual $r$. For any local image block $p$, the covariance satisfies $Cov_p(r,\hat y) \to 0$ \cite{lehtinen2018noise2noise,batson2019noise2self}. This assumption stems from the idea that an ideal denoiser should be able to effectively separate the signal from random noise, and it has become a widely accepted theoretical foundation for analyzing the behavior of network cascades. Although this is an idealized condition and there may be weak residual correlations in actual networks with limited capacity, we believe that PVMC-loss, through its unique physical constraint, namely forcing the variance of residuals to be coupled with the mean of the signal, can actively regularize the network and make its behavior closer to this ideal state compared to unconstrained models. Based on these premises, we can deduce that our method possesses a certain property; the expected bias of the model, $\mathbb{E}(\hat{y}-y)$, is not a random distribution but is proportional to the local variance of the denoised signal, $Var(\hat{y})$. This explains the inherent and controllable smoothing effect of deep learning methods. We provide a formal description and proof of this in \cref{theo:1}.

\begin{theorem}\label{theo:1}
When $\mathcal{L}_{\mathrm{PVMC}} \to 0$, the expectation of the network output $\hat{y}$ satisfies:

\begin{equation}
\begin{aligned}
\mathbb{E}(\hat{y}) = y - \frac{1}{k}\mathbb{E}_{p \sim D}(Var_{p}(\hat{y})),
\end{aligned}
\end{equation}
where $\mathbb{E}_{p \sim D}(\cdot)$ is defined as first calculating the variance within each patch and then taking the expectation of the variance values of all patches globally.
\end{theorem}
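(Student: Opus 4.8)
\emph{Proof plan.} The plan is to route everything through the residual $r=x-\hat y$. Since $x=\hat y+r$ holds identically, taking expectations and using the unbiasedness premise $\mathbb{E}(x)=y$ gives $\mathbb{E}(\hat y)=y-\mathbb{E}(r)$ at once, so the entire theorem collapses to the single claim $\mathbb{E}(r)=\frac{1}{k}\,\mathbb{E}_{p\sim D}\!\left(Var_{p}(\hat y)\right)$. The ingredients available for this are: (i) the Poisson variance--mean law of the PET forward model --- the very relation $Var(\hat y_i)\approx k\,\mathbb{E}(\hat y_i)$ that defined the slope $k$, now applied to the reconstructed low-dose input $x$; (ii) the weak-correlation premise $Cov_{p}(r,\hat y)\to 0$; and (iii) the consequence of $\mathcal{L}_{\mathrm{PVMC}}\to 0$, namely $\pi_{p}\to 1$, hence $Var_{p}(r)\approx k\,{Mean}_{p}(\hat y)$ on every sampled patch, the stabiliser $\varepsilon$ being negligible on patches with non-vanishing mean.

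I would first carry out the argument patch by patch. On a locally uniform patch $p$ the pointwise law (i) transfers to the patch statistics, $Var_{p}(x)\approx k\,{Mean}_{p}(x)$, because under $\lambda_{j}\approx\lambda$ the unbiased sample variance over $p$ estimates the common voxel variance and the sample mean estimates the common voxel mean. Next I expand the left-hand side by the variance-of-a-sum identity, $Var_{p}(x)=Var_{p}(\hat y)+Var_{p}(r)+2\,Cov_{p}(\hat y,r)$, and discard the cross term by (ii); I expand the right-hand side by linearity, ${Mean}_{p}(x)={Mean}_{p}(\hat y)+{Mean}_{p}(r)$. Substituting $Var_{p}(r)\approx k\,{Mean}_{p}(\hat y)$ from (iii) makes the two $k\,{Mean}_{p}(\hat y)$ contributions cancel, leaving $Var_{p}(\hat y)\approx k\,{Mean}_{p}(r)$, i.e. ${Mean}_{p}(r)\approx\frac{1}{k}\,Var_{p}(\hat y)$ for each patch.

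Finally I would average over the random patch location $p\sim D$ (and over the noise). Since the patch mean is an unbiased estimator of the voxel mean, $\mathbb{E}_{p\sim D}\!\left({Mean}_{p}(r)\right)=\mathbb{E}(r)$, while $\mathbb{E}_{p\sim D}\!\left(Var_{p}(\hat y)\right)$ is precisely the term appearing in the statement; this yields $\mathbb{E}(r)=\frac{1}{k}\,\mathbb{E}_{p\sim D}\!\left(Var_{p}(\hat y)\right)$, and combining with $\mathbb{E}(\hat y)=y-\mathbb{E}(r)$ completes the proof. The step I expect to be the main obstacle to make fully rigorous is exactly the transfer in (i) from the pointwise Poisson law to the patch-statistic identity $Var_{p}(x)\approx k\,{Mean}_{p}(x)$: it rests on the locally uniform activity assumption $\lambda_{j}\approx\lambda$, and one must also track that the several ``$\approx$'' signs --- from dropping $Cov_{p}(r,\hat y)$, from dropping $\varepsilon$, and from $\pi_{p}\to 1$ --- become equalities only in the idealised limit $\mathcal{L}_{\mathrm{PVMC}}\to 0$, which is why the conclusion is stated as an asymptotic identity rather than an exact one.
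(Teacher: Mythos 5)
Your proposal is correct and follows essentially the same route as the paper's proof: both rest on the covariance decomposition $Var_p(x) \approx Var_p(\hat{y}) + Var_p(r)$ via the weak-correlation premise, the transfer of the Poisson law to the input patch statistics $Var_p(x) \approx k\,{Mean}_p(x)$, the consequence $Var_p(r) \approx k\,{Mean}_p(\hat{y})$ of $\pi_p \to 1$, and a final averaging over patches with $\mathbb{E}(x) = y$. The only cosmetic difference is that you first isolate $\mathbb{E}(r)$ and the paper keeps the identity in terms of $\mathbb{E}(\hat{y})$ and $\mathbb{E}(x)$; your explicit flagging of the $Var_p(x) \approx k\,{Mean}_p(x)$ transfer as the weakest link is a fair observation, as the paper uses that step without comment.
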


\begin{proof}
According to the definition of PVMC-Loss, the necessary and sufficient condition for the loss function $\mathcal{L}_{\mathrm{PVMC}}(\hat{y}) \to 0$ is that the core ratio $\pi_{p}(\hat{y})$ for all sampled image blocks $p$ approaches 1. Ignoring the minor term $\epsilon$, this condition is equivalent to $Var_{p}(r) \approx k \cdot Mean_{p}(\hat{y})$.

According to the properties of covariance, $Cov_{p}(x, \hat{y})$ can be derived as:
\begin{equation}
\begin{aligned}
Cov_{p}(x, \hat{y})& = Cov_{p}(r+\hat{y}, \hat{y}) = Cov_{p}(r, \hat{y}) + Var_{p}(\hat{y})\\
&\approx Var_{p}(\hat{y}).
\nonumber
\end{aligned}
\end{equation}

We decompose the sample variance of the residual $r = x - \hat{y}$:
\begin{equation}
\begin{aligned}
Var_{p}(r) &= Var_{p}(x) + Var_{p}(\hat{y}) - 2Cov_{p}(x, \hat{y})\\
&\approx Var_{p}(x) - Var_{p}(\hat{y}).
\nonumber
\end{aligned}
\end{equation}

According to \cref{equ:k}, $k \cdot Mean_{p}(\hat{y})$ can be derived as:
\begin{equation}
\begin{aligned}
k \cdot Mean_{p}(\hat{y}) &\approx Var_{p}(r) \approx Var_{p}(x) - Var_{p}(\hat{y}) \\
&\approx k \cdot Mean_{p}(x) - Var_{p}(\hat{y}).
\nonumber
\end{aligned}
\end{equation}

If the image block $p$ is independently and identically distributed, randomly uniformly sampled at the voxel level, then taking the global expectation of the above formula results in:
\begin{equation}
\begin{aligned}
k \cdot \mathbb{E}(\hat{y}) &\approx k \cdot \mathbb{E}(x) - \mathbb{E}_{p \sim D}(Var_{p}(\hat{y}))\\
&=k \cdot y - \mathbb{E}_{p \sim D}(Var_{p}(\hat{y})).
\nonumber
\end{aligned}
\end{equation}
\begin{equation}
\begin{aligned}
\mathbb{E}(\hat{y}) = y - \frac{1}{k}\mathbb{E}_{p \sim D}(Var_{p}(\hat{y})).
\nonumber
\end{aligned}
\end{equation}

\end{proof}

This theorem reveals that the expectation of the network output $\mathbb{E}(\hat{y})$ does not perfectly match the true signal $y$, but is offset by a bias term, $\frac{1}{k}\mathbb{E}_{p\sim D}(Var_{p}(\hat{y}))$, which is proportional to the average local variance of the denoised output itself. This term represents the smoothing effect of the network; therefore, achieving near-unbiased estimation requires this smoothing-induced bias to be minimal.

\subsubsection{Gradient Structure and Adaptive Learning}
\begin{theorem}
For any voxel $\hat{y}_{\textbf{k}}$, where $\textbf{k} \in p$, the exact form of the single block loss $\mathcal{L}_{p}=|\pi_{p}-1|$ on its gradient is given by: 
\begin{equation}
\begin{aligned}
\frac{\partial\mathcal{L}_{p}}{\partial\hat{y}_{\textbf{k}}} = \mathrm{sgn}(\pi_{p}-1) \cdot \frac{\frac{-2(r_{\textbf{k}}-\overline{r}_{p})}{S-1}(k\overline{y}_{p}+\epsilon) - k\frac{Var_{p}(r)}{S}}{(k\overline{y}_{p}+\epsilon)^2},
\end{aligned}
\end{equation}
where $\mathrm{sgn}(\cdot)$ is defined as the sign function, and $\overline{x}$ is defined as the sample mean of a scalar value $x$.

\end{theorem}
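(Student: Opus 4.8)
The plan is to compute the derivative by peeling the loss apart one layer at a time with the chain rule. First, away from the measure-zero set on which $\pi_p = 1$, the map $t \mapsto |t-1|$ is differentiable with derivative $\mathrm{sgn}(t-1)$, so $\frac{\partial\mathcal{L}_p}{\partial\hat{y}_{\textbf{k}}} = \mathrm{sgn}(\pi_p - 1)\,\frac{\partial\pi_p}{\partial\hat{y}_{\textbf{k}}}$; at $\pi_p = 1$ one passes to the subdifferential, which does not alter the stated expression. Since $\pi_p = Var_p(r)/(k\,{Mean}_p(\hat{y}) + \varepsilon)$ is a quotient of two functions of $\hat{y}_{\textbf{k}}$, I then apply the quotient rule, which reduces the task to computing the two partial derivatives $\frac{\partial}{\partial\hat{y}_{\textbf{k}}}{Mean}_p(\hat{y})$ and $\frac{\partial}{\partial\hat{y}_{\textbf{k}}}Var_p(r)$ and substituting back.

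The mean term is immediate: because $\textbf{k}\in p$, we have ${Mean}_p(\hat{y}) = \frac{1}{S}\sum_{j\in p}\hat{y}_j$ and hence $\frac{\partial}{\partial\hat{y}_{\textbf{k}}}{Mean}_p(\hat{y}) = \frac{1}{S}$. The variance term is the only place that needs care, and it is where I would spend the bulk of the write-up. Writing $Var_p(r) = \frac{1}{S-1}\sum_{j\in p}(r_j - \overline{r}_p)^2$ with $r_j = x_j - \hat{y}_j$, I observe that $\hat{y}_{\textbf{k}}$ enters both directly through $r_{\textbf{k}}$, with $\partial r_j/\partial\hat{y}_{\textbf{k}} = -\delta_{j\textbf{k}}$, and indirectly through the patch mean $\overline{r}_p$, with $\partial\overline{r}_p/\partial\hat{y}_{\textbf{k}} = -\tfrac{1}{S}$. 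Differentiating the sum term by term and then invoking the centering identity $\sum_{j\in p}(r_j - \overline{r}_p) = 0$ to annihilate the mean-dependent contribution collapses everything to the clean expression $\frac{\partial}{\partial\hat{y}_{\textbf{k}}}Var_p(r) = \frac{-2(r_{\textbf{k}} - \overline{r}_p)}{S-1}$.

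Substituting these two derivatives into the quotient rule, with numerator $Var_p(r)$ and denominator $k\overline{y}_p + \varepsilon$, yields
\[
\frac{\partial\pi_p}{\partial\hat{y}_{\textbf{k}}} = \frac{\dfrac{-2(r_{\textbf{k}} - \overline{r}_p)}{S-1}\,(k\overline{y}_p + \varepsilon) - Var_p(r)\cdot\dfrac{k}{S}}{(k\overline{y}_p + \varepsilon)^2},
\]
and multiplying by $\mathrm{sgn}(\pi_p - 1)$ gives the claimed identity. The main obstacle is purely one of bookkeeping: correctly tracking the double dependence of the sample variance on $\hat{y}_{\textbf{k}}$ — once directly via the residual $r_{\textbf{k}}$ and once via the patch mean $\overline{r}_p$ — and recognizing that the second contribution cancels identically by the zero-sum property of mean-centered residuals. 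Everything else is an elementary application of the chain and quotient rules; the only genuine caveat worth recording is the non-differentiability of $|\cdot|$ at $\pi_p = 1$, which is handled by the subgradient and does not affect the generic-point formula.
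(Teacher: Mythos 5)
Your proposal is correct and follows essentially the same route as the paper's proof: chain rule through the absolute value, quotient rule on $\pi_p = Var_p(r)/(k\overline{y}_p+\varepsilon)$, and the computation of $\partial Var_p(r)/\partial\hat{y}_{\textbf{k}} = -2(r_{\textbf{k}}-\overline{r}_p)/(S-1)$ via $\partial r_i/\partial\hat{y}_{\textbf{k}}=-\delta_{i\textbf{k}}$ and $\partial\overline{r}_p/\partial\hat{y}_{\textbf{k}}=-1/S$. Your write-up is in fact slightly more careful than the paper's, since you explicitly invoke the centering identity $\sum_{j\in p}(r_j-\overline{r}_p)=0$ to kill the mean-dependent term and flag the subdifferential issue at $\pi_p=1$.
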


\begin{proof}
According to the chain rule:
\begin{equation}
\begin{aligned}
\frac{\partial\mathcal{L}_{p}}{\partial\hat{y}_{\textbf{k}}} = \mathrm{sgn}(\pi_{p}-1)\frac{\partial\pi_{p}}{\partial\hat{y}_{\textbf{k}}}.
\nonumber
\end{aligned}
\end{equation}

Let $\pi_{p} = N/D$, where $N = Var_{p}(r) = \frac{1}{S-1}\sum_{i \in p}(r_{i}-\overline{r}_{p})^{2}$ and $D = k\overline{y}_{p} + \epsilon$. Take the partial derivative of $D$ and $N$ as follows:
\begin{equation}
\begin{aligned}
\frac{\partial D}{\partial\hat{y}_{\textbf{k}}} = \frac{k}{S},
\nonumber
\end{aligned}
\end{equation}
\begin{equation}
\begin{aligned}
\frac{\partial N}{\partial\hat{y}_{\textbf{k}}} = \frac{2}{S-1}\sum_{i \in p}(r_{i}-\overline{r}_{p})(\frac{\partial r_{i}}{\partial\hat{y}_{\textbf{k}}}-\frac{\partial\overline{r}_{p}}{\partial\hat{y}_{\textbf{k}}}),
\nonumber
\end{aligned}
\end{equation}
where $\frac{\partial r_{i}}{\partial\hat{y}_{\textbf{k}}} = -\delta_{i\textbf{k}}$, $\delta_{i\textbf{k}}$ is defined as the Kronecker symbol and $\frac{\partial\overline{r}_{p}}{\partial\hat{y}_{\textbf{k}}} = -\frac{1}{S}$, so the partial derivative of $N$ can be derived as:
\begin{equation}
\begin{aligned}
\frac{\partial N}{\partial\hat{y}_{\textbf{k}}} = \frac{-2(r_{\textbf{k}}-\overline{r}_{p})}{S-1}.
\nonumber
\end{aligned}
\end{equation}
So the single block loss $\mathcal{L}_{p}=|\pi _{p}-1|$ on the exact form of its gradient:
\begin{equation}
\begin{aligned}
\frac{\partial(N/D)}{\partial\hat{y}_{\textbf{k}}} &= \frac{D(\partial N/\partial\hat{y}_{\textbf{k}}) - N(\partial D/\partial\hat{y}_{\textbf{k}})}{D^2}\\
&=\mathrm{sgn}(\pi_{p}-1) \cdot \frac{\frac{-2(r_{\textbf{k}}-\overline{r}_{p})}{S-1}(k\overline{y}_{p}+\epsilon) - k\frac{Var_{p}(r)}{S}}{(k\overline{y}_{p}+\epsilon)^2}.
\nonumber
\end{aligned}
\end{equation}

\end{proof}

Denote the standard deviation of the residuals on block $p$ as $\sigma_r$. From the structure of the gradient formula, we can see that the modulus length of the gradient satisfies the following relation:
\begin{equation}
\begin{aligned}
||\frac{\partial\mathcal{L}_{p}}{\partial\hat{y}_{\textbf{k}}}|| \in \Theta\left(\frac{\sigma_r}{k\overline{y}_{p}+\epsilon}\right),
\end{aligned}
\end{equation}
where $\Theta$ provide a asymptotic tight bound of a function and $||\cdot||$ is defined as the norm.

Since in the Poisson scenario the residual variance $\sigma_r^2 \approx k\overline{y}_p$, the relation can be further derived as follows:
\begin{equation}
\begin{aligned}
||\frac{\partial\mathcal{L}_{p}}{\partial\hat{y}_{\textbf{k}}}|| \in \Theta\left(\frac{\sqrt{k\overline{y}_p}}{k\overline{y}_{p}+\epsilon}\right) \approx \Theta\left(\frac{1}{\sqrt{k\overline{y}_p}}\right).
\end{aligned}
\end{equation}

The relation $\Theta((\overline{y}_p)^{-1/2})$ describes gradient adaptivity, which holds for $k\overline{y}_p \gg \varepsilon$. In the low count region, the gradient is upper bounded by $\epsilon$, avoiding gradient explosion.







\subsubsection{Interpretation as GMM}
\begin{theorem}
PVMC-Loss can be interpreted as an implementation of the Generalized Moment Matching method (GMM) \cite{hansen1982large}.
\end{theorem}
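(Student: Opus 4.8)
The plan is to exhibit the three defining ingredients of a GMM estimator and match each to a quantity already present in $\mathcal{L}_{\mathrm{PVMC}}$. A GMM estimator is specified by (i) a parameter vector — here the network weights $\theta$ together with the learnable Poisson slope $k$; (ii) a vector-valued moment function whose population expectation vanishes at the true parameter; and (iii) an extremum criterion that aggregates the sample moments through a (possibly optimal) weighting, the estimate being the minimiser of the weighted norm of the sample moment vector. First I would define, for the $p$-th sampled patch, the scalar moment $g_{p}(\theta) := \pi_{p}(\theta)-1 = \dfrac{Var_{p}(r)}{k\,{Mean}_{p}(\hat y)+\varepsilon}-1$ and stack these into $g(\theta)=(g_{1}(\theta),\dots,g_{P}(\theta))^{\top}$. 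With this identification $\mathcal{L}_{\mathrm{PVMC}}(\theta)=\tfrac1P\sum_{p=1}^{P}|g_{p}(\theta)| = \tfrac1P\|g(\theta)\|_{1}$ is precisely the norm of a sample moment vector under the uniform diagonal weighting $W=\tfrac1P I$ — the form of a GMM criterion, here in its robust $\ell_{1}$ variant rather than Hansen's quadratic one.

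Second I would establish the population orthogonality condition, which is where the physics enters and which is what makes $g$ a legitimate moment function. Using the premises already invoked for \cref{theo:1} — unbiasedness $\mathbb{E}(x)=y$, the locally-uniform activity assumption $\lambda_{j}\approx\lambda$, and asymptotic decorrelation $Cov_{p}(r,\hat y)\to 0$ — at the ideal denoiser (call its parameter $\theta^{\star}$, at which $\hat y=y$) one has ${Mean}_{p}(\hat y)\to y$, while propagating the Poisson variance--mean identity through the linear reconstruction model, as encapsulated in \cref{equ:k}, gives $\mathbb{E}[Var_{p}(r)]\to k\,y$. Hence $\mathbb{E}[g_{p}(\theta^{\star})]\to 0$ for every patch, so minimising $\mathcal{L}_{\mathrm{PVMC}}$ is the empirical analogue of imposing $\mathbb{E}[g(\theta^{\star})]=0$. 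Because the $P$ patch-wise conditions over-identify the single scalar $k$ (beyond the network), this is the genuinely over-identified regime in which GMM is substantive, and the stationarity equation $\partial\mathcal{L}_{\mathrm{PVMC}}/\partial k=0$ is then exactly the GMM estimating equation for $\hat k$ — which is why the learned slope should agree with the offline calibration.

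The step I expect to carry the real weight is showing that the normalisation built into $\pi_{p}$ is not arbitrary but approximates the \emph{efficient} GMM weighting, i.e. the inverse of the covariance of the moment conditions. Treating the $S$ voxels of a patch as approximately independent Poisson variates — justified by the locally-uniform assumption, the reconstruction weights $w_{ij}$ coupling them only mildly and altering constants rather than orders — the sampling fluctuation of the raw moment $m_{p}=Var_{p}(r)-k\,{Mean}_{p}(\hat y)$ is dominated by the sample-variance term, whose standard deviation under Poisson statistics scales as $\Theta(\lambda)=\Theta({Mean}_{p})$ in the high-count regime. Dividing $m_{p}$ by $k\,{Mean}_{p}(\hat y)$ to form $\pi_{p}-1$ therefore rescales each patch's moment to unit order, reproducing up to constants the inverse-standard-deviation reweighting that the optimal diagonal $W$ would apply. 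Thus $\mathcal{L}_{\mathrm{PVMC}}$ coincides, up to the deliberate substitution of an absolute value for the quadratic aggregation, with an efficiently weighted GMM objective, the $\ell_{1}$ choice trading a little asymptotic efficiency for the robustness to outlier patches and mild model mismatch claimed in the abstract.

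The chief obstacle is that last efficiency argument: the within-patch voxels are in truth correlated through the system matrix, so the clean ``iid Poisson'' evaluation of the sampling variance of $Var_{p}(r)$ holds only to leading order, and one must check that the stabiliser $\varepsilon$ — and the low-count regime where it dominates — does not break the matching, which is consistent with the intentional capping reflected in the gradient magnitude $\Theta\!\big(\sigma_{r}/(k\overline{y}_{p}+\varepsilon)\big)$ derived earlier. A secondary, essentially expository point is that Hansen's classical GMM uses a quadratic form, so the theorem has to be read as ``$\mathcal{L}_{\mathrm{PVMC}}$ is a generalized-moment-matching estimator'' — an extremum estimator built from over-identifying moment conditions with a near-optimal weighting — with the $\ell_{1}$ norm placing it in the robust branch of that family rather than in the textbook efficient-GMM case.
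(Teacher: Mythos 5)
Your proposal is correct in substance but organizes the GMM identification quite differently from the paper. The paper's proof sets up exactly \emph{two} population moment conditions --- a first-moment condition $m_1(\theta)=\mathbb{E}(x-f_\theta(x))=0$ enforced by the L1 data-fidelity term, and a second-moment condition $m_2(\theta)=\mathbb{E}(Var_p(x-f_\theta(x))-(k\,{Mean}_p(f_\theta(x))+\epsilon))=0$ enforced by the PVMC term --- and then completes the argument by appealing to Hansen's identification condition, arguing that the Jacobian $D_\theta\mathbf{m}(\theta)$ is generically full rank so that the GMM estimator is consistent; the robustness remark (low-order moments versus full Poisson NLL) is made qualitatively. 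You instead treat PVMC-Loss alone as the GMM criterion, stacking $P$ patch-wise moments $g_p(\theta)=\pi_p(\theta)-1$ into an over-identified system, verifying the population orthogonality $\mathbb{E}[g_p(\theta^\star)]\to 0$ from the premises of \cref{theo:1} and \cref{equ:k}, and adding an argument the paper does not attempt: that dividing by $k\,{Mean}_p(\hat y)$ approximately implements the efficient (inverse-covariance) weighting, with the $\ell_1$ aggregation read as the robust branch of the family. Each route buys something the other lacks: the paper's two-condition formulation folds the L1 term into the moment system and supplies the rank/consistency step you omit, while your version better explains why the particular normalization $\pi_p-1$ is the natural moment function, why the learned $k$ should agree with offline calibration (over-identification), and where the $\ell_1$-versus-quadratic discrepancy with textbook GMM sits. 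Given that the theorem is an interpretive claim rather than a sharp quantitative one, both arguments are acceptable; if you want yours to subsume the paper's, add the identification (full-rank Jacobian) remark, and be explicit that your efficiency claim for the weighting holds only to leading order because within-patch voxels are correlated through the system matrix --- a caveat you already flag correctly.
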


\begin{proof}
GMM is a method for parameter estimation by matching a set of moment conditions that are theoretically expected to be zero. For our problem, we can define the following moment conditions:
\begin{equation}
\begin{aligned}
m_1(\theta) = \mathbb{E}(x - f_{\theta}(x)) = 0,
\nonumber
\end{aligned}
\end{equation}
\begin{equation}
\begin{aligned}
m_2(\theta) = \mathbb{E}(Var_{p}(x - f_{\theta}(x)) - (k \cdot Mean_{p}(f_{\theta}(x)) + \epsilon)) = 0,
\nonumber
\end{aligned}
\end{equation}
where $m_1(\cdot)$ is the first moment condition, $m_2(\cdot)$ is the second moment condition and $\theta$ is defined as the set of parameters of the network.

In our total training objective, the L1 loss term mainly drives the network to satisfy the first-order moment condition, while the PVMC-Loss term can be viewed as an L1-norm form penalty term built around the second moment condition. Since the gradient of the network $\nabla_\theta f_\theta(x)\neq0$ holds almost everywhere in the parameter space, the Jacobian $D_\theta\mathbf{m}(\theta) =(m_1,m_2)^\top$ of the moment vector $\mathbf{m}(\theta) $ is generally expected to have full rank under typical training conditions. This satisfies the identification condition of Hansen et al. \cite{hansen1982large} and helps ensure the consistency of the GMM estimator. Unlike Poisson NLL, which aims to match the entire probability distribution, low-order moment based GMM strategies are computationally simpler and rely less on the exact morphological assumptions of the full distribution, which generally makes them more robust in the face of slight mismatches between model and data.

\end{proof}

\subsection{PC-UNet}
\begin{table}[!t]
\caption{Comparision Experiments.The best results are in \textbf{bold} and the second best are \underline{underlined}.}
\begin{center}
\begin{tabular}{cccc}
\toprule
Method& PSNR& SSIM& TIME\\
\midrule
 GANLC \cite{liu2023solving}& 32.70& 0.9616&0.0208 $\pm$ 0.0001\\
 CoreDiff \cite{gao2023corediff}& \textbf{37.83}& \underline{0.9795}&0.1544 $\pm$ 0.0024\\
 U-Net \cite{ronneberger2015u}& 35.99& 0.9699&\textbf{0.0062} $\pm$ \textbf{0.0010}\\
 SwinUnet \cite{cao2022swin}& 37.10& 0.9750&0.0280 $\pm$ 0.0030\\
 VM-Unet \cite{ruan2024vm}& 37.20& 0.9760&0.0210 $\pm$ 0.0025\\
 CSWin-Unet \cite{liu2025cswin}& 37.25& 0.9770&0.0320 $\pm$ 0.0035\\
 \textbf{PC-UNet (ours)}& \underline{37.68}& \textbf{0.9809}&\underline{0.0078 $\pm$ 0.0011}\\
 \bottomrule
 \end{tabular}
\label{tab:comparision}
\end{center}
\end{table}

U-Net features a symmetric encoder-decoder structure, where the encoder extracts hierarchical image features through convolutions and downsampling, and the decoder restores spatial resolution via upsampling. Key-hop connections link encoder and decoder feature maps at matching scales, alleviating the vanishing gradient problem and enhancing high-frequency detail transfer. Our PC-UNet incorporates the proposed PVMC-Loss.

PC-UNet is trained end-to-end by optimizing a composite loss function that aims to simultaneously guarantee the fidelity and physical consistency of the generated images. The total training objective $\mathcal{L}_{\text{total}}$ is defined as follows:
\begin{equation}
\begin{aligned}
\mathcal{L}_{\text{total}} = \mathcal{L}_{\text{L1}} + \lambda \cdot \mathcal{L}_{\text{PVMC}},
\end{aligned}
\end{equation}
where $\lambda$ is a scalar hyperparameter that balances the two optimization goals of data fidelity and physical consistency, and $\mathcal{L}_{\text{L1}} = ||\hat{y} - y||_1$ is the standard of L1 loss. As a Data Fidelity Term, it drives the network output $\hat{y}$ to approximate the gold standard image $y$ at the voxel level, ensuring the overall similarity of the image content. It is worth noting that within the $\mathcal{L}_{PVMC}$ term, the network's own output $\hat{y}$ is used as an approximation of the true signal mean. This bootstrapping approach is a common and effective strategy in self-consistent optimization problems.


\section{Experiments}

\subsection{Dataset}
We use subjects 1 to 60 from Bern-Inselspital-2022 in the UDPET Challenge 2024 dataset \cite{xue2022cross}. The 1\%-2\% low-dose images serve as noisy inputs, with corresponding full-dose images as targets for paired training. Of the dataset, 40 pairs are for training, and 20 pairs for testing.
\subsection{Comparision Experiments}
\subsubsection{Setting}
We conduct experiments on a system with 8 NVIDIA RTX A6000 GPUs. All baseline models use the hyperparameters from their original papers. Both PC-UNet and U-Net employ a 4-layer U-Net architecture with encoder feature maps: [64, 128, 256, 512]. The network processes single-channel grayscale images to single-channel outputs. We use learnable transposed convolutions for upsampling in the decoder. Each convolution block is followed by Batch Normalization, and a 0.1 dropout rate is applied to prevent overfitting.

\begin{figure}[t]
    \centering
    \includegraphics[width=0.90\linewidth]{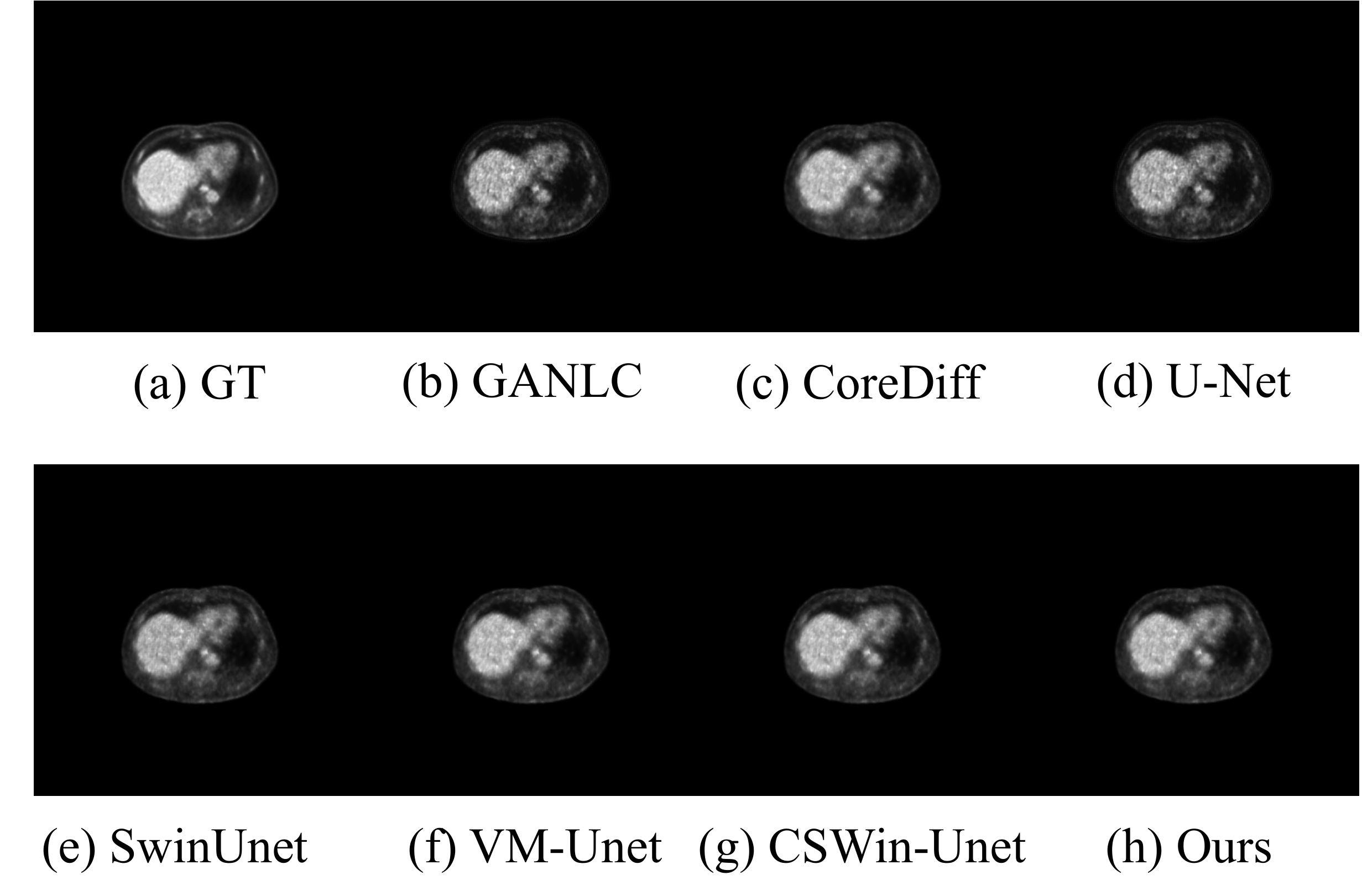}
    \caption{
The denoising results of different methods.
    }
    \label{fig:compar}
\end{figure}

Models are trained with the Adam optimizer for up to 1500 epochs, using early stopping based on validation metrics. Training uses a batch size of 16 and an initial learning rate of 1e-4, reduced by a scheduler to at least 1e-7. We fix the random seed at 3407 for reproducibility. Patches are set to $16^2$, $k$ starts at 0.8, and $\lambda$ is set to 1e-5.

\subsubsection{Evaluation Metrics}
We use PSNR and SSIM \cite{wang2004image} as evaluation metrics and introduce a TIME metric to showcase the lightweight U-Net backbone, measuring the model's reasoning time for an image. We report the average and variance of TIME across three experiments.

\subsubsection{Results}
The comparative experiments in \cref{tab:comparision} show our model's optimal PSNR and SSIM in the U-Net architecture. The denoising results in \cref{fig:compar} indicate that PC-UNet closely approaches optimality and leads in SSIM. While slightly slower than the standard U-Net, our model outperforms DDPM and GAN in time, narrowing the PSNR gap.

\begin{figure}[t]
    \centering
    \subfloat[]{
        \includegraphics[width=0.30\linewidth]{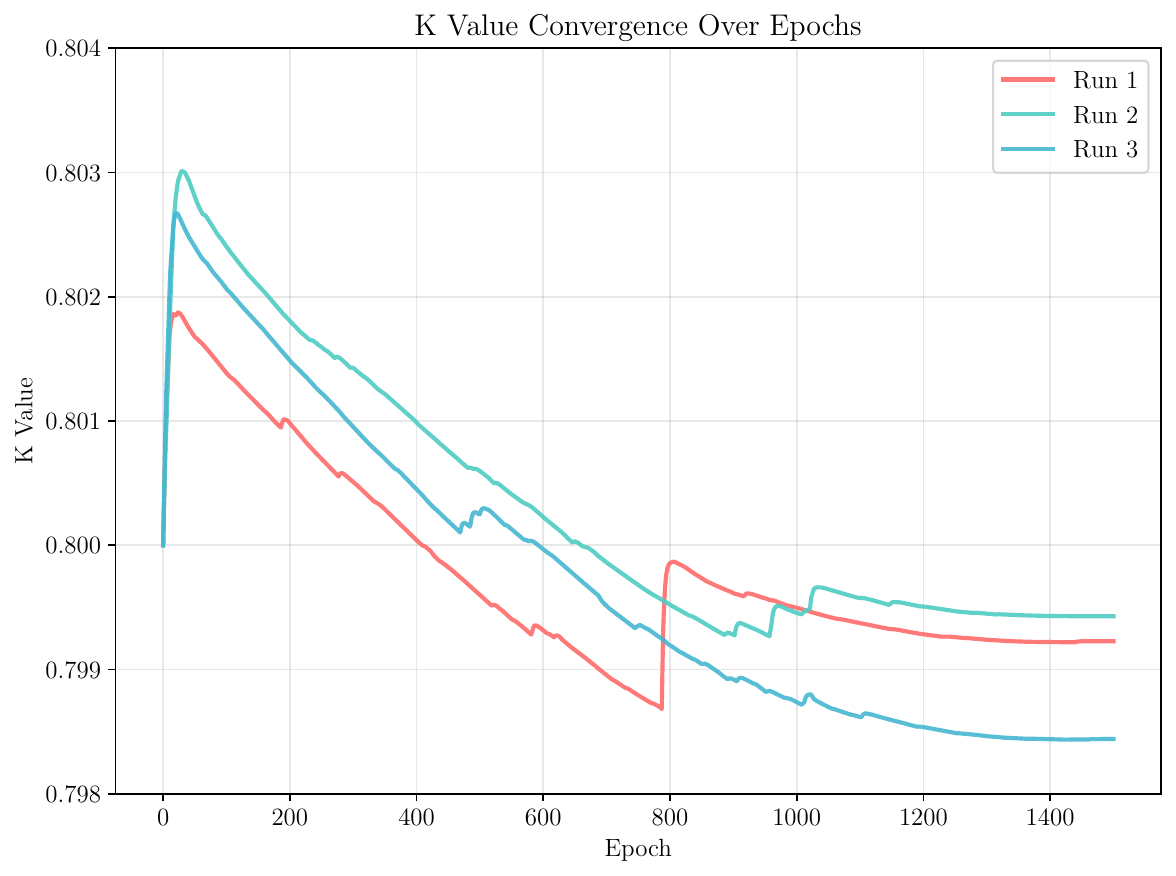}
        \label{fig:k}
    }
    \subfloat[]{
        \includegraphics[width=0.30\linewidth]{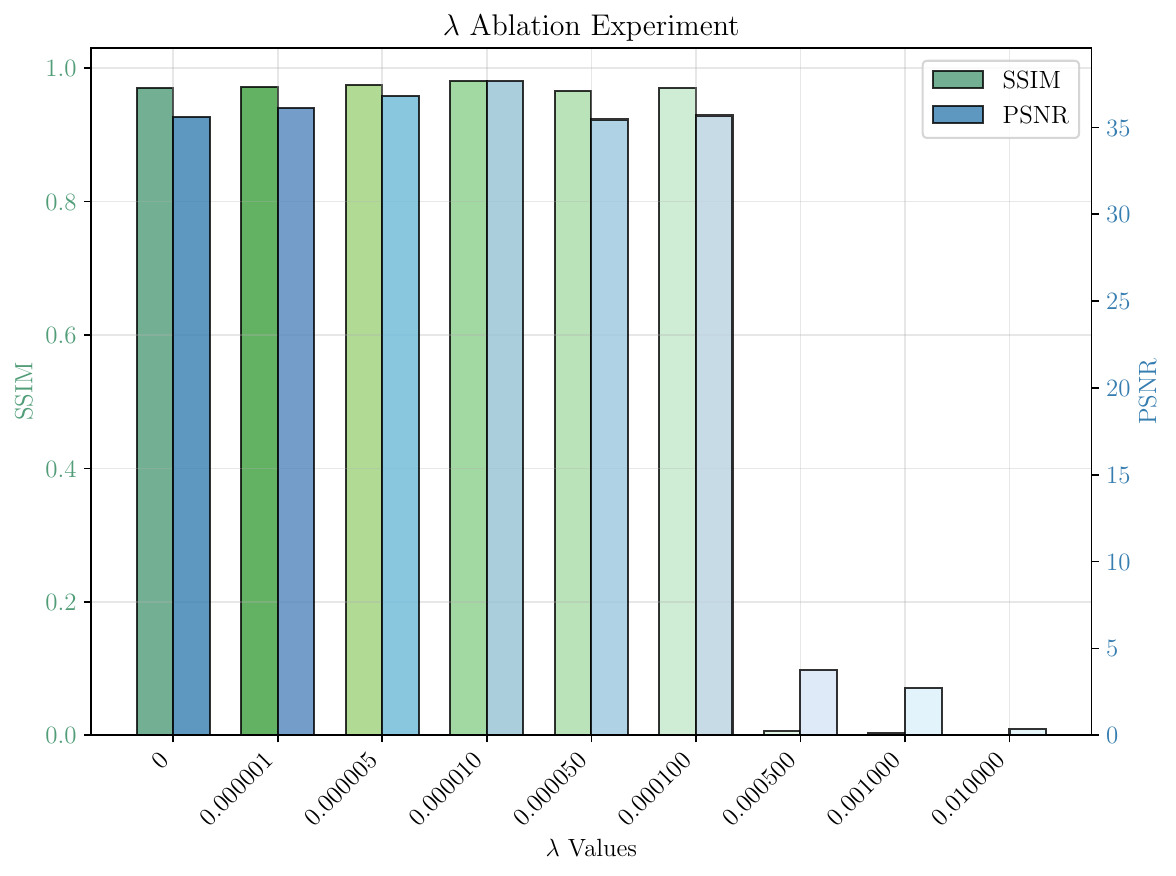}
        \label{fig:lambda}
    }
    \subfloat[]{
        \includegraphics[width=0.30\linewidth]{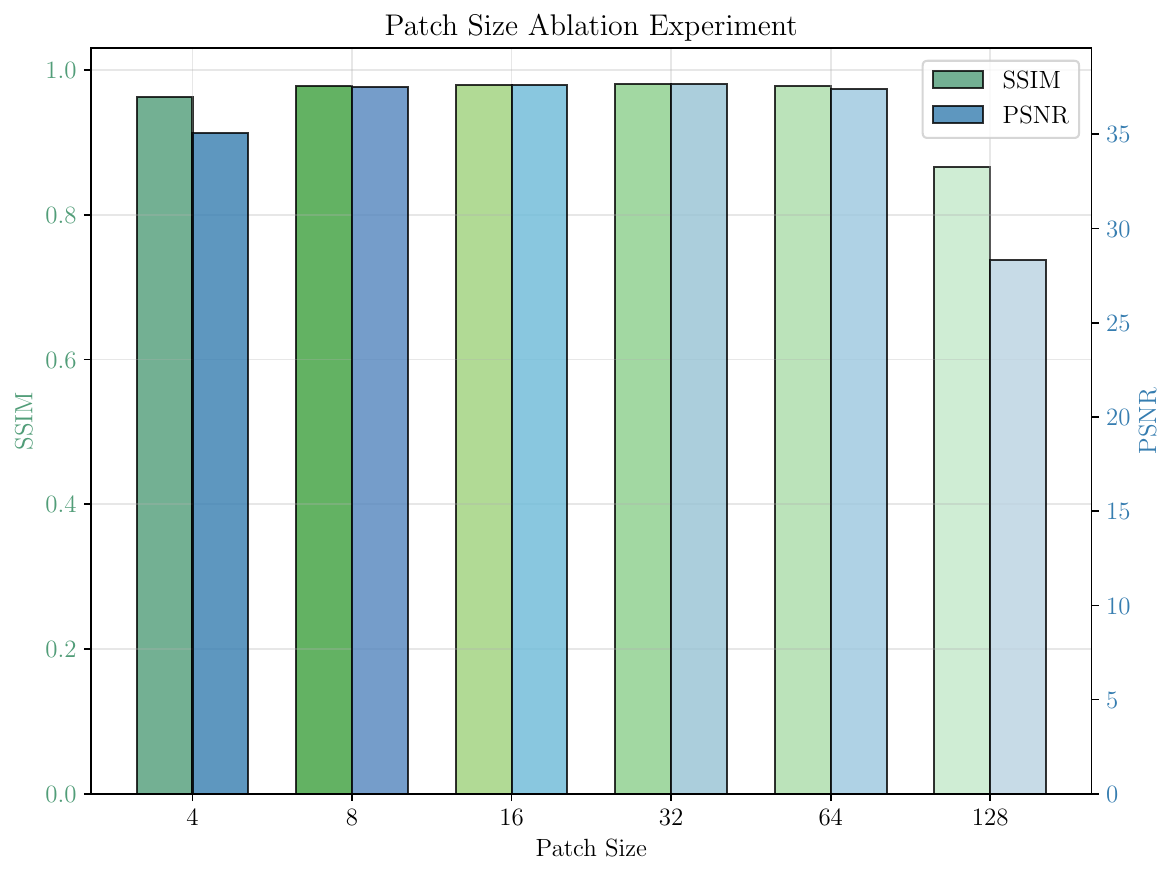}
        \label{fig:patch}
    }
    \caption{(a) $k$ converges as epochs increase; (b) ablation over 9 values of $\lambda$; (c) ablation over 6 patch sizes.}
    \label{fig:ablations}
\end{figure}

\subsubsection{Analysis of parameter $k$}
To verify the proposed parameter $k$'s physical validity, we compare it with actual physical parameters. Since the physical parameter $k$ cannot be obtained online, we divide the dataset from the same device into three equal parts and train each separately. The $k$ value change with training rounds is shown in \cref{fig:k}. Results show parameters $k$ from different datasets converge within 0.001, suggesting it may represent the real physical parameter. This confirms our neural network-based parameter $k$ retains physical properties.

\subsubsection{Analysis of Hyperparameter $\lambda$}
The parameters $\lambda$ are key to the PVMC-loss. We fix patches to $16^2$, choosing $\lambda$ from $\{$0, 1e-2, 1e-3, 5e-4, 1e-4, 5e-5, 1e-5, 5e-6, 1e-6$\}$. Results in \cref{fig:lambda} show accuracy initially increases, then decreases. When $\lambda$ is 0, PC-UNet becomes standard U-Net, proving PVMC-loss effectiveness. High $\lambda$ decreases accuracy, revealing that excessive physical consistency can reduce performance.

\subsubsection{Analysis of Hyperparameter Patches}

We set $\lambda=$1e-5 and choose patch values in \cref{equ:k} from the set $\{$$4^2$, $8^2$, $16^2$, $32^2$, $64^2$, $128^2$$\}$. Results are shown in \cref{fig:patch}. For patches $8^2$, $16^2$, $32^2$, or $64^2$, PSNR and SSIM values are similar, showing our method's robustness. With patches $4^2$, PSNR and SSIM slightly drop due to statistical instability overshadowing improvements in physical model fidelity, as reliable means and variances in small patches are harder to compute. At patches $128^2$, SSIM and PSNR decline sharply because physical constraints fail, and random sampling adds uncertainty, hindering effective learning. We conclude that the patches hyperparameter is broadly robust and doesn't need minor adjustments in practice.

\section{Conclusion}
Experiments show that our PC-UNet significantly improves PET denoising. We provide a theoretical analysis of PVMC-Loss, demonstrating its asymptotic unbiasedness and gradient adaptability, and its connection to the GMM framework. However, PVMC-Loss derivation assumes uniform local radioactivity distribution, suitable for backgrounds or homogeneous tissues. This may be inaccurate at sharp boundaries between tumors and normal tissues. Future research could explore better methods for obtaining $k$.

\bibliographystyle{IEEEtran}
\bibliography{references}

\end{document}